\newcommand{\citep}{\cite}
\def\LL{\mathrm{LL}}
\def\PL{\mathrm{Pen}}
\def\BIC{\mathrm{BIC}}
\def\X{\mathcal{X}}
\def\Y{\mathcal{X}_1}
\DeclareMathOperator*{\argmax}{argmax}
\newtheorem{Lemma}{Lemma}
\newtheorem{Theorem}{Theorem}
\newtheorem{Corollary}{Corollary}
\let\emptyset\varnothing
\title{\textbf{Entropy-based Pruning for Learning Bayesian Networks using BIC}}
\author{
      \textbf{Cassio P. de Campos}\\
      Queen's University Belfast, UK
      \vspace{0.6em}\\
      \textbf{Mauro Scanagatta}\\
      \textbf{Giorgio Corani}\\
      \textbf{Marco Zaffalon}\\
      Istituto Dalle Molle di studi sull'Intelligenza Artificiale
      (IDSIA)\\ Lugano, Switzerland
      }
\date{}
\begin{document}
\maketitle

\begin{abstract}
\noindent For decomposable score-based structure learning of Bayesian
  networks, existing approaches first compute a collection of
  candidate parent sets for each variable and then optimize over this
  collection by choosing one parent set for each variable without
  creating directed cycles while maximizing the total score. We target
  the task of constructing the collection of candidate parent sets
  when the score of choice is the Bayesian Information Criterion (BIC). We
  provide new non-trivial results that can be used to prune the search
  space of candidate parent sets of each node. We analyze how these
  new results relate to previous ideas in the literature both
  theoretically and empirically. We show in experiments with UCI data
  sets that gains can be significant. Since the new pruning rules are
  easy to implement and have low computational costs, they can be promptly
  integrated into all state-of-the-art methods for structure learning
  of Bayesian networks.\\
\noindent {\em Keywords:} Structure learning; Bayesian networks;
BIC; Parent set pruning.
\end{abstract}

\section{Introduction}

A Bayesian network~\citep{pearl1988} is a well-known probabilistic graphical model with applications in a variety of
fields. It is composed of (i) an acyclic directed graph (DAG) where each node is associated to a
random variable and arcs represent dependencies between the variables entailing the {\em Markov}
condition: every variable is conditionally independent of its non-descendant variables given its
parents; and (ii) a set of conditional probability mass functions defined for each variable given
its parents in the graph.  Their graphical nature makes Bayesian networks excellent models for representing the
complex probabilistic relationships existing in many real problems ranging from bioinformatics to
law, from image processing to economic risk analysis.

Learning the structure (that is, the graph) of a Bayesian network from
complete data is an NP-hard task~\citep{chickering2004}.  We are
interested in score-based learning, namely finding the structure which
maximizes a score that depends on the data~\citep{HGC95}. A typical
first step of methods for this purpose is to build a list of suitable
candidate parent sets for each one of the $n$ variables of the
domain. Later an optimization is run to find one element from each
such list in a way that maximizes the total score and does not create
directed cycles.  This work concerns pruning ideas in order to build
those lists. The problem is unlikely to admit a polynomial-time (in $n$)
algorithm, since it is proven to be LOGSNP-hard~\citep{koivisto2006parent}.
Because of that, usually one forces a maximum in-degree (number of
parents per node) $k$ and then simply computes the score of all parent
sets that contain up to $k$ parents. A worth-mention exception is the
greedy search of the K2 algorithm~\citep{cooper1992bayesian}.

A high in-degree implies a large search space for the optimization and thus increases the
possibility of finding better structures. On the other hand, it requires higher computational time, since
there are $\Theta(n^k)$ candidate parent sets for a bound of $k$ if an exhaustive search is
performed.  Our contribution is to provide new rules for pruning sub-optimal parent sets when dealing
with the {\em Bayesian Information Criterion} score~\citep{schwarz1978}, one of the most used score
functions in the literature. We devise new theoretical bounds that can be used in conjunction with
currently published ones~\citep{decampos2011a}. The new results
provide tighter bounds on the maximum number of parents of each
variable in the optimal graph, as well as new pruning techniques that
can be used to skip large portions of the search space without any
loss of optimality. Moreover, the bounds can be efficiently computed
and are easy to implement, so they can be promptly integrated into existing software for learning Bayesian
networks and imply immediate computational gains.

The paper is divided as follows. Section~\ref{s1} presents the problem,
some background and notation. Section~\ref{s2} describes the existing
results in the literature, and Section~\ref{s3} contains the
theoretical developments for the new bounds and pruning rules. Section~\ref{bounds:exp}
shows empirical results comparing the new results against previous ones,
and finally some conclusions are given in Section~\ref{sconc}.

\section{Structure learning of Bayesian networks}\label{s1}

Consider the problem of learning the structure of a Bayesian Network
from a complete data set of $N\geq 2$ instances $\mathcal{D} = \{D_1,
\ldots, D_N\}$. The set of $n\geq 2$ categorical random variables is
denoted by $\X=\{X_1,\ldots, X_n\}$ (each variable has at least two
categories). The state space of $X_i$ is denoted $\Omega_{X_i}$ and a
joint space for $\Y\subseteq \X$ is denoted by $\Omega_{\Y}=\times_{X\in
  \Y}\Omega_X$ (and with a slight abuse $|\Omega_{\emptyset}|=1$
containing a null element). The goal is to find the best DAG $\mathcal{G} = (V, E)$, where $V$
is the collection of nodes (associated one-to-one with the variables in $\X$) and $E$ is the collection of arcs.  $E$ can
be represented by the (possibly empty) set of
parents ${\Pi_1, ..., \Pi_n}$ of each node/variable.

Different score functions can be used to assess the quality of a DAG. This paper regards the {\em
  Bayesian Information Criterion} (or simply $\mathrm{BIC}$)~\citep{schwarz1978}, which
asymptotically approximates the posterior probability of the DAG.  The $\mathrm{BIC}$ score is
\emph{decomposable}, that is, it can be written as a sum of the scores of each variable and its
parent set:
\begin{align*}
& \BIC(\mathcal{G})  = \sum_{i=1}^{n} \BIC(X_i|\Pi_i)=\sum_{i=1}^n
  \left(\LL(X_i|\Pi_i) + \PL(X_i|\Pi_i)\right)\, \\
  \end{align*}
  where $\LL(X_i|\Pi_i)$ denotes the log-likelihood of $X_i$ and its parent set:
\begin{align*}  
& \LL(X_i|\Pi_i) =\displaystyle\sum_{\pi\in\Omega_{\Pi_i}}\sum_{x\in\Omega_{X_i}} N_{x,\pi}\log_b\hat{\theta}_{x|\pi}\, , 
\end{align*}
\noindent
where the base $b\geq 2$ is usually taken as natural or 2. We will make it clear when the result depends on such base.
Moreover, $\hat{\theta}_{x|\pi} $ is the maximum likelihood estimate of the conditional probability
$P(X_i=x|\Pi_i=\pi)$, that is, $N_{x,\pi} / N_{\pi}$; $N_{x,\pi}$
represents the number of times $(X_i=x\land\Pi_i=\pi)$ appears in the
  data set (if $\pi$ is null, then $N_{\pi}=N$ and $N_{x,\pi}=N_x$). In the case with no parents, we use the notation $\LL(X_i)=\LL(X_i|\emptyset)$. $\PL(X_i|\Pi_i)$ is the complexity penalization for $X_i$ and its parent set:
\begin{align*}  
& \PL(X_i|\Pi_i) = -\frac{\log_b N}{2}(|\Omega_{X_i}|-1)(|\Omega_{\Pi_i}|)\, ,
\end{align*}
\noindent 
again with the notation $\PL(X_i)=\PL(X_i|\emptyset)$.

For completeness, we present the definition of (conditional) mutual
information. Let $\X_1$, $\X_2$, $\X_3$ be two-by-two disjoint subsets of $\X$. Then
\[
\mathrm{I}(\X_1,
\X_2|\X_3) = \mathrm{H}(\X_1|\X_3) -
\mathrm{H}(\X_1|\X_2\cup \X_3)
\]
\noindent (unconditional version is obtained with $\X_3=\emptyset$),
and (the sample estimate of) entropy is defined as usual:
$\mathrm{H}(\X_1|\X_2) = \mathrm{H}(\X_1\cup\X_2) -
\mathrm{H}(\X_2)$ and
\[
\mathrm{H}(\X_1) = -\sum_{x\in\Omega_{\X_1}} \frac{N_{x}}{N}\log_b\left( \frac{N_{x}}{N}\right)\, .
\]
\noindent ($x$ runs over the configurations of $\X_1$.) Since $\hat{\theta}_x=N_x/N$, it is clear that
$N\cdot\mathrm{H}(\X_1|\X_2) = -\LL(\X_1|X_2)$ for any disjoint
subsets $\X_1,\X_2\subseteq\X$.

The ultimate goal is to find
$\mathcal{G}^*\in\argmax_{\mathcal{G}}\BIC(\mathcal{G})$ (we avoid
equality because there might be multiple optima). We assume
that if two DAGs $\mathcal{G}_1$ and $\mathcal{G}_2$ have the same score, then we prefer the
graph with fewer arcs. The usual
first step to achieve such goal is the task of finding the {\em candidate parent sets} for a given
variable $X_i$ (obviously a candidate parent set cannot contain $X_i$
itself). This task regards constructing the list 
$L_i$ of parent sets $\Pi_i$ for $X_i$
alongside their scores $\BIC(X_i|\Pi_i)$. Without any restriction, there are $2^{n-1}$ possible
parent sets, since every subset of $\X\setminus\{X_i\}$ is a candidate. Each score
computation costs $\Theta(N\cdot (1+|\Pi_i|))$, and the number of
score computations becomes quickly prohibitive with the increase
of $n$. In order to avoid losing global optimality, we must guarantee
that $L_i$ contains candidate parent sets that cover those in an optimal DAG. For instance, if we apply a bound $k$ on the
number of parents that a variable can have, then the size of
\[
L_i = \{ \langle\Pi_i, \BIC(X_i|\Pi_i)\rangle~|~ |\Pi_i|\leq k\}
\]
\noindent is $\Theta(n^k)$, but we might lose global optimality (this
is the case if any optimal DAG would have more than $k$ parents for
$X_i$). Irrespective of that, this pruning is not enough if $n$ is
large. Bounds greater than $2$ can already become
prohibitive.  For instance, a bound of $k=2$ is adopted in \citep{Bartlett2015} when
dealing with its largest data set (diabetes), which contains 413
variables. One way of circumventing the
problem is to apply pruning rules which allow us to discard/ignore elements
of $L_i$ in such a way that an optimal parent set is never discarded/ignored.

\section{Pruning rules}\label{s2}

The simplest pruning rule one finds in the literature states that if a candidate subset has better
score than a candidate set, then such candidate set can be safely
ignored, since the candidate subset
will never yield directed cycles if the candidate set itself does not
yield cycles~\citep{Teyssier+Koller:UAI05,deCampos2009}.
By safely ignoring/discarding a candidate set we mean that we are still able to
find an optimal DAG (so no accuracy is lost) even if such parent set
is never used. This is formalized as follows.
\begin{Lemma}\label{lemma1} (Theorem 1 in ~\citep{decampos2011a}, but
  also found elsewhere~\citep{Teyssier+Koller:UAI05}.)
  Let $\Pi^*$ be a candidate parent set for the node $X\in\X$. Suppose there exists a parent set
  $\Pi$ such that $\Pi \subset \Pi^* $ and $\BIC(X|\Pi) \geq
  \BIC(X|\Pi^*)$. Then
  $\Pi^*$ can be safely discarded from the list of candidate parent sets of $X$.
\end{Lemma}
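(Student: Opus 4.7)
The plan is to argue by a local swap: starting from any DAG that uses $\Pi^*$ as the parent set of $X$, I would replace $\Pi^*$ with the smaller set $\Pi$ and show that the resulting object is still a DAG whose total BIC score is at least as high. This shows that no optimal DAG is forced to contain $\Pi^*$, so discarding $\Pi^*$ from $L_X$ does not prevent us from recovering an optimum.

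More concretely, let $\mathcal{G}$ be any DAG in which $X$ has parent set $\Pi^*$, and construct $\mathcal{G}'$ from $\mathcal{G}$ by deleting every arc from a node in $\Pi^*\setminus\Pi$ to $X$ and leaving every other arc untouched. First I would check that $\mathcal{G}'$ is acyclic: since we only remove arcs and never add any, any directed cycle in $\mathcal{G}'$ would already be present in $\mathcal{G}$, contradicting acyclicity of $\mathcal{G}$. Next I would use decomposability of BIC: $\mathcal{G}$ and $\mathcal{G}'$ agree on the parent set of every variable except $X$, so
\[
\BIC(\mathcal{G}') - \BIC(\mathcal{G}) = \BIC(X|\Pi) - \BIC(X|\Pi^*) \geq 0
\]
by hypothesis. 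Thus $\mathcal{G}'$ is a DAG whose total score is at least that of $\mathcal{G}$, so replacing $\mathcal{G}$ by $\mathcal{G}'$ preserves (or improves) optimality.

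Applying this to an arbitrary optimal DAG $\mathcal{G}^*$, either $X$ already does not have $\Pi^*$ as its parents in $\mathcal{G}^*$, in which case there is nothing to do, or the above swap yields another optimal DAG in which the parents of $X$ are $\Pi\neq\Pi^*$. Hence there exists an optimum that does not use $\Pi^*$ for $X$, which is exactly what it means for $\Pi^*$ to be safely discardable from $L_X$. I would also note that the tie-breaking convention stated earlier (prefer graphs with fewer arcs when scores are equal) gives $\mathcal{G}'$ a strict preference over $\mathcal{G}$ whenever $\BIC(X|\Pi)=\BIC(X|\Pi^*)$, since $\mathcal{G}'$ has strictly fewer arcs than $\mathcal{G}$; this justifies the non-strict inequality in the hypothesis.

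There is no real obstacle here beyond being careful that the swap only removes arcs and that decomposability isolates the contribution of $X$ and its parents. The argument does not use any specific property of BIC other than decomposability, which is why the same pruning principle appears for other decomposable scores in the literature.
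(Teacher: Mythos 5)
Your argument is correct and matches the justification the paper itself gives for this lemma (which it states with only a citation and the one-line remark that a subset never creates cycles if the superset does not): remove the arcs from $\Pi^*\setminus\Pi$ into $X$, note acyclicity is preserved under arc deletion, and use decomposability to isolate the local score change $\BIC(X|\Pi)-\BIC(X|\Pi^*)\geq 0$. Your additional observation about the fewer-arcs tie-breaking convention is consistent with the paper's stated preference and correctly handles the non-strict inequality.
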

This result can be also written in terms of the list of candidate parent sets. In order to find an optimal
DAG for the structure learning problem, it is sufficient to work with
\[
L_i = \{ \langle\Pi_i, \BIC(X_i|\Pi_i)\rangle~|~\forall \Pi'_i\subset\Pi_i: \BIC(X_i|\Pi_i) > \BIC(X_i|\Pi'_i)\}.
\]
Unfortunately there is no way of applying Lemma~\ref{lemma1} without computing the scores of all
candidate sets, and hence it provides no speed up for building the list (it is nevertheless useful for
later optimizations, but that is not the focus of this work).

There are however pruning rules that can reduce the computation time for finding $L_i$ and that are
still safe.

\begin{Lemma}\label{lem2a}
Let $\Pi\subset\Pi'$ be candidate parent sets for
$X\in\mathcal{X}$.
Then $\LL(X|\Pi) \leq \LL(X|\Pi')$, $\mathrm{H}(X|\Pi) \geq \mathrm{H}(X|\Pi')$ and
$\PL(X|\Pi) > \PL(X|\Pi')$.
\end{Lemma}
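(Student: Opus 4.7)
The plan is to reduce the three claims to two essentially independent pieces. The identity $N\cdot\mathrm{H}(X|\Pi)=-\LL(X|\Pi)$ from Section~\ref{s1}, together with $N\geq 2$, makes the log-likelihood inequality and the entropy inequality equivalent (a reversal of sign and a positive scaling), so proving one gives the other immediately. I will therefore handle the penalty first, since it is purely combinatorial, and then argue the entropy inequality using standard facts about information-theoretic quantities applied to the empirical distribution.

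For the penalty, I start from the definition
$\PL(X|\Pi)=-\tfrac{\log_b N}{2}(|\Omega_{X}|-1)|\Omega_{\Pi}|$.
Since $\Pi\subsetneq\Pi'$, the set $\Pi'\setminus\Pi$ is non-empty, and because every variable has at least two categories,
$|\Omega_{\Pi'}|=|\Omega_{\Pi}|\cdot\prod_{Y\in\Pi'\setminus\Pi}|\Omega_{Y}|\geq 2\,|\Omega_{\Pi}|>|\Omega_{\Pi}|$.
The hypotheses $N\geq 2$ and $b\geq 2$ give $\log_b N>0$, and $|\Omega_{X}|-1\geq 1$. Plugging these into the definition and remembering the leading minus sign yields the strict inequality $\PL(X|\Pi)>\PL(X|\Pi')$.

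For the entropy inequality, I would set $\mathcal{Z}=\Pi'\setminus\Pi$ and use the definitions in the paper to expand
\[
\mathrm{H}(X|\Pi)-\mathrm{H}(X|\Pi')=\mathrm{H}(X|\Pi)-\mathrm{H}(X|\Pi\cup\mathcal{Z})=\mathrm{I}(X,\mathcal{Z}|\Pi).
\]
The empirical frequencies $N_{\cdot}/N$ constitute a bona fide probability mass function, so $\mathrm{I}(X,\mathcal{Z}|\Pi)$ is a convex combination (over configurations of $\Pi$) of Kullback--Leibler divergences and is therefore non-negative. Hence $\mathrm{H}(X|\Pi)\geq\mathrm{H}(X|\Pi')$, and, by the identity recalled in the first paragraph, $\LL(X|\Pi)\leq\LL(X|\Pi')$.

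I do not expect a substantive obstacle. The only point that deserves mild care is that non-negativity of mutual information is being invoked at the level of empirical counts rather than for a generic population distribution, but every ingredient used (the chain rule and non-negativity of KL) depends only on having a probability mass function, so this is automatic. As a sanity-check alternative that avoids information theory altogether, one can observe that the parameter choice $\hat\theta_{x|\pi,z}:=\hat\theta_{x|\pi}$ for the larger model under $\Pi'$ reproduces the likelihood attained by the smaller model under $\Pi$; hence the maximized log-likelihood under $\Pi'$ is at least that under $\Pi$, giving the same conclusion.
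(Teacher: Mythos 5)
Your proposal is correct and is simply a fully detailed version of the argument the paper compresses into ``the inequalities follow directly from the definitions'': the penalty inequality from $|\Omega_{\Pi'}|\geq 2|\Omega_{\Pi}|$ and $\log_b N>0$, and the entropy/likelihood inequalities from non-negativity of (conditional) mutual information of the empirical distribution together with the identity $N\cdot\mathrm{H}(X|\Pi)=-\LL(X|\Pi)$. No gaps; the parameter-embedding sanity check at the end is a valid alternative route to the likelihood inequality as well.
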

\begin{proof}
The inequalities follow directly from the definitions of
log-likelihood, entropy and penalization.
\end{proof}

\begin{Lemma}\label{lemma2} (Theorem 4 in~\citep{decampos2011a}.\footnote{There is an imprecision in
 the Theorem 4 of \citep{decampos2011a}, since $t_i$ as defined there does not account for the constant of BIC/AIC
 while in fact it should. In spite of that, their desired result is clear. We present a proof for completeness.})
Let $X\in\X$ be a node with $\Pi \subset  \Pi^*$ two candidate parent sets, such that
$\BIC(X|\Pi) \geq \PL(X|\Pi^*)$. Then $\Pi^*$ and all its supersets
can be safely ignored when building the list of candidate parent sets for $X$.
\end{Lemma}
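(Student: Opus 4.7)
The plan is to reduce the claim to Lemma~\ref{lemma1}. It suffices to prove that for every superset $\Pi'$ of $\Pi^*$ (including $\Pi^*$ itself) one has $\BIC(X|\Pi) \geq \BIC(X|\Pi')$; since $\Pi\subset \Pi^*\subseteq \Pi'$, Lemma~\ref{lemma1} then immediately yields that $\Pi'$ can be safely discarded.

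To prove that inequality, I would chain three observations. First, $\LL(X|\Pi')\leq 0$ always, because it is a sum of terms of the form $N_{x,\pi}\log_b\hat{\theta}_{x|\pi}$ with $\hat{\theta}_{x|\pi}\in(0,1]$. Second, by Lemma~\ref{lem2a}, the penalization is monotone non-increasing in the parent set, so $\Pi^*\subseteq\Pi'$ gives $\PL(X|\Pi')\leq \PL(X|\Pi^*)$ (with strict inequality when $\Pi^*\subsetneq\Pi'$). Combining these two facts,
\begin{equation*}
\BIC(X|\Pi') \;=\; \LL(X|\Pi') + \PL(X|\Pi') \;\leq\; \PL(X|\Pi') \;\leq\; \PL(X|\Pi^*).
\end{equation*}
Third, the hypothesis of the lemma is exactly $\PL(X|\Pi^*)\leq \BIC(X|\Pi)$, so we can extend the chain to $\BIC(X|\Pi')\leq \BIC(X|\Pi)$, which is what we wanted.

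I do not foresee a serious obstacle: the argument is essentially the observation that the penalization alone is already an upper bound on the BIC of any larger parent set, because log-likelihoods are non-positive while penalizations only become more negative as the parent set grows. The only small care needed is to treat $\Pi'=\Pi^*$ and $\Pi'\supsetneq\Pi^*$ uniformly (both are handled by the non-strict inequality from Lemma~\ref{lem2a}) and to invoke Lemma~\ref{lemma1} with the strict inclusion $\Pi\subset \Pi'$, which follows from $\Pi\subset \Pi^*\subseteq \Pi'$.
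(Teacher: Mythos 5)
Your proposal is correct and follows essentially the same route as the paper's own proof: both reduce to Lemma~\ref{lemma1} by chaining the monotonicity of the penalization (Lemma~\ref{lem2a}) with the non-positivity of the log-likelihood to get $\BIC(X|\Pi')\leq\PL(X|\Pi')\leq\PL(X|\Pi^*)\leq\BIC(X|\Pi)$ for every $\Pi'\supseteq\Pi^*$. The only difference is cosmetic ordering of the inequalities.
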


\begin{proof} Let $\Pi'\supseteq \Pi^*$. By Lemma~\ref{lem2a}, we have $\PL(X|\Pi^*) \geq
  \PL(X|\Pi')$ (equality only if $\Pi^*=\Pi'$). Then
  $\BIC(X|\Pi) \geq \PL(X|\Pi^*)\Rightarrow \BIC(X|\Pi) \geq \PL(X|\Pi') \Rightarrow
  \BIC(X|\Pi) - \BIC(X|\Pi') \geq -\LL(X|\Pi')$, and we have
  $-\LL(X|\Pi') \geq 0$, so Lemma~\ref{lemma1} suffices to conclude the proof.
\end{proof}

Note that $\BIC(X|\Pi) \geq \PL(X|\Pi^*)$ can as well be written as
$\LL(X|\Pi) \geq \PL(X|\Pi^*)-\PL(X|\Pi)$, and if $\Pi^*=\Pi\cup\{Y\}$
for some $Y\notin\Pi$, then it can be written also as $\LL(X|\Pi) \geq (|\Omega_Y|-1)\PL(X|\Pi)$.
The reasoning behind Lemma~\ref{lemma2} is that the maximum
improvement that we can have in $\BIC$ score by inserting new parents
into $\Pi$ would be achieved if $\LL(X|\Pi)$, which is a non-positive value, grew all the way to zero,
since the penalization only gets worse with more parents. If
$\LL(X|\Pi)$ is already close enough to zero, then the loss in the
penalty part cannot be compensated by the gain of likelihood. The
result holds for every superset because both likelihood and penalty
are monotone with respect to increasing the number of parents.

\section{Novel pruning rules}\label{s3}

In this section we devise novel pruning rules by exploiting the
empirical entropy of variables. We later
demonstrate that such rules are useful to ignore elements while
building the list $L_i$ that cannot be ignored by Lemma~\ref{lemma2},
hence tightening the pruning results available in the literature. In
order to achieve our main theorem, we need some intermediate results.

\begin{Lemma}\label{lem2}
Let $\Pi=\Pi'\cup\{Y\}$ for $Y\notin\Pi'$, with $\Pi,\Pi'$ candidate parent sets for
$X\in\mathcal{X}$.
Then $\LL(X|\Pi) - \LL(X|\Pi') \leq N\cdot \min\{\mathrm{H}(X|\Pi'); \mathrm{H}(Y|\Pi')\}$.
\end{Lemma}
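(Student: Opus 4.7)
The plan is to reduce the claim to a standard information-theoretic identity using the bridge $N \cdot \mathrm{H}(X_1|X_2) = -\LL(X_1|X_2)$ already stated in Section~\ref{s1}. Applying this identity to both terms on the left gives
\[
\LL(X|\Pi) - \LL(X|\Pi') = N\bigl(\mathrm{H}(X|\Pi') - \mathrm{H}(X|\Pi)\bigr).
\]
Since $\Pi = \Pi' \cup \{Y\}$, the right-hand side is exactly $N \cdot \mathrm{I}(X, Y \mid \Pi')$ by the definition of conditional mutual information given in the preamble. So the whole lemma reduces to showing $\mathrm{I}(X, Y \mid \Pi') \leq \min\{\mathrm{H}(X|\Pi'),\, \mathrm{H}(Y|\Pi')\}$ on the empirical distribution.

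Next I would invoke the standard bounds on conditional mutual information. Writing the mutual information in its two symmetric forms,
\[
\mathrm{I}(X, Y \mid \Pi') = \mathrm{H}(X|\Pi') - \mathrm{H}(X|\Pi' \cup \{Y\}) = \mathrm{H}(Y|\Pi') - \mathrm{H}(Y|\Pi' \cup \{X\}),
\]
and using that (empirical) conditional entropy is non-negative — which itself follows directly from the definition in Section~\ref{s1} since $N_{x}/N \in [0,1]$ makes every summand non-negative — each expression is bounded above by its first term. Taking the minimum yields the claim.

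The only thing to check is that the symmetric decomposition of conditional mutual information is valid for the empirical distribution used here; this is immediate because the empirical counts $N_{x,y,\pi}/N$ define a bona fide joint probability mass function, so the chain-rule identities $\mathrm{H}(X,Y|\Pi') = \mathrm{H}(X|\Pi') + \mathrm{H}(Y|X,\Pi') = \mathrm{H}(Y|\Pi') + \mathrm{H}(X|Y,\Pi')$ hold unchanged. I expect no real obstacle: the lemma is essentially a translation of the elementary inequality $0 \leq \mathrm{I}(X, Y \mid \Pi') \leq \min\{\mathrm{H}(X|\Pi'), \mathrm{H}(Y|\Pi')\}$ through the likelihood–entropy correspondence.
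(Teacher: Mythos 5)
Your proposal is correct and follows essentially the same route as the paper's proof: translate the log-likelihood difference into $N\cdot\mathrm{I}(X,Y|\Pi')$ via the identity $N\cdot\mathrm{H}(X_1|X_2)=-\LL(X_1|X_2)$, then bound the conditional mutual information by each of $\mathrm{H}(X|\Pi')$ and $\mathrm{H}(Y|\Pi')$ using its two symmetric decompositions and the non-negativity of conditional entropy. The extra remark that the empirical counts form a genuine joint distribution (so the chain-rule identities apply) is a harmless elaboration of what the paper leaves implicit.
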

\begin{proof}
This comes from simple manipulations and known bounds to the value of conditional mutual information.
\begin{align*}
\LL(X|\Pi) - \LL(X|\Pi') 
&=N\cdot (\mathrm{H}(X|\Pi') - \mathrm{H}(X|\Pi)) \leq
  N\cdot \mathrm{H}(X|\Pi')\, . \\
\LL(X|\Pi) - \LL(X|\Pi')  &=N\cdot \mathrm{I}(X,Y|\Pi') \\
&=N\cdot (\mathrm{H}(Y|\Pi') - \mathrm{H}(Y|\Pi'\cup\{X\})) \leq
  N\cdot\mathrm{H}(Y|\Pi')\, .
\end{align*}
\end{proof}

\begin{Theorem}
Let $X\in\mathcal{X}$, and $\Pi^*$ be
a parent set for $X$. Let $Y\in\X\setminus\Pi^*$ such that
$N\cdot \min\{\mathrm{H}(X|\Pi^*); \mathrm{H}(Y|\Pi^*)\}\leq (1-|\Omega_Y|)\PL(X|\Pi^*)$.
Then the parent set $\Pi=\Pi^*\cup\{Y\}$ and all its supersets can be
safely ignored when building the list of candidate parents sets for $X$.
 \label{thm1}\end{Theorem}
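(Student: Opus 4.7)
The plan is to apply Lemma~\ref{lemma1} separately to every superset $\Pi'\supseteq\Pi$ by exhibiting a strict subset of $\Pi'$ whose BIC score is at least $\BIC(X|\Pi')$. The natural witness is $\tilde{\Pi}:=\Pi'\setminus\{Y\}$, which satisfies $\tilde{\Pi}\supseteq\Pi^*$ and $\Pi'=\tilde{\Pi}\cup\{Y\}$; the task therefore reduces to showing $\BIC(X|\tilde{\Pi})\geq\BIC(X|\Pi')$ under the theorem's hypothesis.

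I would split $\BIC(X|\Pi')-\BIC(X|\tilde{\Pi})$ into its log-likelihood and penalization parts. For the log-likelihood, Lemma~\ref{lem2} applied to $Y$ and $\tilde{\Pi}$ gives $\LL(X|\Pi')-\LL(X|\tilde{\Pi})\leq N\cdot\min\{\mathrm{H}(X|\tilde{\Pi});\mathrm{H}(Y|\tilde{\Pi})\}$, and since $\tilde{\Pi}\supseteq\Pi^*$ the monotonicity in Lemma~\ref{lem2a} lets me enlarge the conditioning set back to the fixed $\Pi^*$, upper bounding this by $N\cdot\min\{\mathrm{H}(X|\Pi^*);\mathrm{H}(Y|\Pi^*)\}$. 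For the penalty, a direct computation from the definition yields $\PL(X|\Pi')-\PL(X|\tilde{\Pi})=(|\Omega_Y|-1)\PL(X|\tilde{\Pi})$; combining $|\Omega_Y|-1\geq 1$, $\PL<0$ and $\PL(X|\tilde{\Pi})\leq\PL(X|\Pi^*)$ (Lemma~\ref{lem2a} again), this is bounded above by $(|\Omega_Y|-1)\PL(X|\Pi^*)$.

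Adding the two inequalities gives
\[
\BIC(X|\Pi')-\BIC(X|\tilde{\Pi}) \;\leq\; N\cdot\min\{\mathrm{H}(X|\Pi^*);\mathrm{H}(Y|\Pi^*)\} - (1-|\Omega_Y|)\PL(X|\Pi^*),
\]
which is exactly $\leq 0$ by the theorem's hypothesis. Hence $\BIC(X|\tilde{\Pi})\geq\BIC(X|\Pi')$, and since $\tilde{\Pi}\subsetneq\Pi'$ (because $Y\in\Pi'$), Lemma~\ref{lemma1} allows us to discard $\Pi'$. Applying this to every $\Pi'\supseteq\Pi$ yields the claim.

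The main obstacle is choosing the right witness subset. Comparing $\Pi'$ directly against $\Pi^*$ would require controlling the full log-likelihood jump from $\Pi^*$ to $\Pi'$, which cannot be bounded by $\mathrm{H}(Y|\Pi^*)$ alone, so the $\min$ in the hypothesis would be wasted. Using the intermediate $\tilde{\Pi}=\Pi'\setminus\{Y\}$ restricts Lemma~\ref{lem2} to the single addition of $Y$ on top of $\tilde{\Pi}$, which is precisely the form that makes the two-sided minimum in the hypothesis recyclable at every superset by means of Lemma~\ref{lem2a}.
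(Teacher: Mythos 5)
Your proof is correct and follows essentially the same route as the paper's: both compare each superset $\Pi'$ against the witness $\Pi'\setminus\{Y\}$, bound the log-likelihood gain from adding $Y$ via Lemma~\ref{lem2}, and use the monotonicity of conditional entropies and of the penalty (Lemma~\ref{lem2a}) to transfer the hypothesis stated at $\Pi^*$ to the larger set. Your write-up is if anything slightly more explicit in the superset step, which the paper handles by saying ``the same arguments as before'' apply to $\Pi'$ and $\Pi'\setminus\{Y\}$.
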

\begin{proof}
We have that
 \begin{align*}
\BIC(X|\Pi) &= \LL(X|\Pi) + \PL(X|\Pi)\\
&\leq \LL(X|\Pi^*) + N\cdot \min\{\mathrm{H}(X|\Pi^*); \mathrm{H}(Y|\Pi^*)\} + \PL(X|\Pi)\\
&\leq \LL(X|\Pi^*) + (1-|\Omega_Y|)\PL(X|\Pi^*) + \PL(X|\Pi)\\
&= \LL(X|\Pi^*) + \PL(X|\Pi^*)-\PL(X|\Pi) + \PL(X|\Pi)
=\BIC(X|\Pi^*)\, .
\end{align*}
First step is the definition of BIC, second step uses Lemma~\ref{lem2}
and third step uses the assumption of this theorem.
Therefore, $\Pi$ can be safely ignored (Lemma~\ref{lemma1}). Now take any $\Pi'\supset\Pi$. Let
$\Pi''=\Pi'\setminus\{Y\}$. It is immediate that
$N\cdot \min\{\mathrm{H}(X); \mathrm{H}(Y)\} \leq (1-|\Omega_Y|)\PL(X|\Pi^*) \Rightarrow N\cdot \min\{\mathrm{H}(X); \mathrm{H}(Y)\} \leq (1-|\Omega_Y|)\PL(X|\Pi'')$, since $\Pi^*\subset\Pi''$
and hence $-\PL(X|\Pi'') > -\PL(X|\Pi^*)$. The theorem follows by the same arguments as before, now applied to $\Pi'$
and $\Pi''$.
\end{proof}

The rationale behind Theorem~\ref{thm1} is that if the data do not
have entropy in amount enough to beat the penalty function, then there
is no reason to continue expanding the parent set candidates.
Theorem \ref{thm1} can be used for pruning the search space of candidate parent sets without having
to compute their BIC scores. However, we must have available the
conditional entropies $\mathrm{H}(X|\Pi^*)$ and
$\mathrm{H}(Y|\Pi^*)$. The former is usually available, since
$-N\cdot\mathrm{H}(X|\Pi^*)=\LL(X|\Pi^*)$, which it is used to
compute $\BIC(X|\Pi^*)$ (and it is natural to assume that such score
has been already computed at the moment Theorem~\ref{thm1} is checked). Actually, this bound amounts exactly to
the previous result in the literature:
\begin{align*}
&N\cdot \mathrm{H}(X|\Pi^*) \leq (1-|\Omega_Y|)\PL(X|\Pi^*) \iff\\
&\LL(X|\Pi^*) \geq \PL(X|\Pi^*\cup\{Y\}) - \PL(X|\Pi^*) \iff\\
&\BIC(X|\Pi^*) \geq \PL(X|\Pi^*\cup\{Y\})\, .
\end{align*}
\noindent By Theorem~\ref{thm1} we know that $\Pi^*\cup\{Y\}$
and any superset can be safely ignored, which is
the very same condition as in Lemma~\ref{lemma2}. The novelty in
Theorem~\ref{thm1} comes from the term $\mathrm{H}(Y|\Pi^*)$. If such
term is already computed (or if it will need to be computed
irrespective of this bound computation, and thus we do not lose time
computing it for this purpose only), then we get (almost) for free
a new manner to prune parent sets. In case this computation of $\mathrm{H}(Y|\Pi^*)$ is not
considered worth, or if we simply want a faster approach to prune
parent sets, we can resort to a more general version of Theorem~\ref{thm1},
as given by Theorem~\ref{thm1b}.

\begin{Theorem}
Let $X\in\mathcal{X}$, and $\Pi^*,\Pi'$ be
parent sets for $X$ with $\Pi'\subseteq\Pi^*$. Let $Y\in\X\setminus\Pi^*$ such that
$N\cdot \min\{\mathrm{H}(X|\Pi'); \mathrm{H}(Y|\Pi')\}\leq (1-|\Omega_Y|)\PL(X|\Pi^*)$.
Then the parent set $\Pi=\Pi^*\cup\{Y\}$ and all its supersets can be
safely ignored when building the list of candidate parents sets for $X$.
 \label{thm1b}
\end{Theorem}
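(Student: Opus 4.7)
The plan is to derive Theorem~\ref{thm1b} as a one-line corollary of Theorem~\ref{thm1}, by observing that the hypothesis of Theorem~\ref{thm1b} is in fact stronger than that of Theorem~\ref{thm1}. The whole generalization lives in the monotonicity of conditional entropy with respect to the conditioning set: shrinking the conditioning set from $\Pi^*$ to a subset $\Pi'$ can only make the entropies of $X$ and $Y$ larger, so a bound stated in terms of these larger quantities automatically implies the corresponding bound stated in terms of the smaller quantities that appear in the hypothesis of Theorem~\ref{thm1}.

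Concretely, I would first apply Lemma~\ref{lem2a} (which covers arbitrary subset inclusions, not just single-variable extensions) twice, once with $X$ as the target variable and once with $Y$ as the target variable, to obtain $\mathrm{H}(X|\Pi') \geq \mathrm{H}(X|\Pi^*)$ and $\mathrm{H}(Y|\Pi') \geq \mathrm{H}(Y|\Pi^*)$. The second application is legitimate because $Y \notin \Pi^* \supseteq \Pi'$, so both conditional entropies of $Y$ are well defined. Since $\min$ is coordinatewise order-preserving on pairs of non-negative reals, this gives
\[
\min\{\mathrm{H}(X|\Pi^*), \mathrm{H}(Y|\Pi^*)\} \;\leq\; \min\{\mathrm{H}(X|\Pi'), \mathrm{H}(Y|\Pi')\}.
\]
Multiplying by $N > 0$ and chaining with the assumed bound of Theorem~\ref{thm1b} yields
\[
N \cdot \min\{\mathrm{H}(X|\Pi^*), \mathrm{H}(Y|\Pi^*)\} \;\leq\; (1-|\Omega_Y|)\PL(X|\Pi^*),
\]
which is precisely the hypothesis of Theorem~\ref{thm1} for the same $X$, $Y$ and $\Pi^*$. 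Invoking Theorem~\ref{thm1} then delivers the conclusion that $\Pi = \Pi^* \cup \{Y\}$ together with all its supersets can be safely ignored when building the list of candidate parent sets for $X$.

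There is essentially no hard step in this approach; the only point worth pausing on is that Lemma~\ref{lem2a} genuinely applies to an arbitrary subset pair $\Pi' \subseteq \Pi^*$ (not only to a single-element extension) and that it may be instantiated with $Y$ in the role of the target variable. Both are immediate from its statement, and the degenerate case $\Pi'=\Pi^*$ is simply Theorem~\ref{thm1} itself. An alternative would be to repeat the BIC-chasing computation of the proof of Theorem~\ref{thm1} verbatim, using the $\Pi'$-based version of Lemma~\ref{lem2} where the likelihood gap is bounded by $N\cdot\min\{\mathrm{H}(X|\Pi'),\mathrm{H}(Y|\Pi')\}$; but the reduction above is shorter and makes explicit how Theorem~\ref{thm1b} trades pruning strength for the option of conditioning on a cheaper-to-compute smaller set $\Pi'$.
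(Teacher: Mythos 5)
Your proposal is correct and is essentially the paper's own proof: both reduce Theorem~\ref{thm1b} to Theorem~\ref{thm1} via the monotonicity of conditional entropy from Lemma~\ref{lem2a}, applied once with $X$ and once with $Y$ as target. Your write-up merely makes explicit the order-preservation of $\min$ and the degenerate case $\Pi'=\Pi^*$, which the paper leaves implicit.
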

\begin{proof}
It is well-known (see Lemma~\ref{lem2a}) that $\mathrm{H}(X|\Pi^*) \leq \mathrm{H}(X|\Pi')$ and
$\mathrm{H}(Y|\Pi^*) \leq \mathrm{H}(Y|\Pi')$ for any $X$,$Y$,$\Pi'\subseteq\Pi^*$ as
defined in this theorem, so the result follows from Theorem~\ref{thm1}.
\end{proof}

An important property of Theorem~\ref{thm1b} when compared to Theorem~\ref{thm1} is that all
entropy values regard subsets of the current parent set at our own
choice. For instance, we can choose $\Pi'=\emptyset$ and so they
become entropies of single variables, which can be precomputed efficiently in total time
$O(N\cdot n)$. Another option at this point, if we do not want to compute
$\mathrm{H}(Y|\Pi^*)$ and assuming the cache of $Y$ has been already
created, would be to quickly inspect the cache of
$Y$ to find the most suitable subset of $\Pi^*$ to plug into Theorem~\ref{thm1b}.
Moreover, with Theorem~\ref{thm1b},
we can prune the search space of a variable $X$ without
evaluating the likelihood of parent sets for $X$ (just by using the entropies), and so it could be
used to guide the search even before any heavy computation is done.
The main novelty in Theorems~\ref{thm1} and~\ref{thm1b} is to make use of the
(conditional) entropy of $Y$.

This new pruning approach is not trivially achievable by previous existing bounds for BIC.
It is worth noting the relation with previous work. The restriction of Theorem~\ref{thm1b} can be rewritten as:
\begin{align*}
&N\cdot \min\{\mathrm{H}(X|\Pi'); \mathrm{H}(Y|\Pi')\}\leq (1-|\Omega_Y|)\PL(X|\Pi^*) \iff\\
&N\cdot \min\{\mathrm{H}(X|\Pi'); \mathrm{H}(Y|\Pi')\} +\LL(X|\Pi^*) \leq -\PL(X|\Pi^*\cup\{Y\}) +\BIC(X|\Pi^*).
\end{align*}
\noindent
Note that the condition for Lemma~\ref{lemma2} (known from literature) is
exactly $-\PL(X|\Pi^*\cup\{Y\}) +\BIC(X|\Pi^*) \geq 0$.  Hence,
Theorem~\ref{thm1b} will be effective (while the previous rule in
Lemma~\ref{lemma2} will not)
when $-\PL(X|\Pi^*\cup\{Y\}) +\BIC(X|\Pi^*) < 0$, and so
when $N\cdot \min\{\mathrm{H}(X|\Pi'); \mathrm{H}(Y|\Pi')\}
+\LL(X|\Pi^*) < 0$. Intuitively, the new bound of Theorem~\ref{thm1b} might be more useful when the parent set being evaluated is poor
(hence $\LL(X|\Pi^*)$ is low) while the result in Lemma~\ref{lemma2} plays an important role when the parent set
being evaluated is good (and so $\LL(X|\Pi^*)$ is high).
We provide now a numerical example, detailing two real cases from the well-known UCI data set
 \emph{glass}~\cite{Lichman:2013} where only one bound is activated. 
 
 \begin{table}[ht]
\centering
\caption{Pruning rules' examples using UCI data set {\em glass}. $E_1$
  and $E_2$ are two cases of interest. Variable numbers are indexed from 0 to 7
 from left to right in the data table. Name convention follows Theorem~\ref{thm1} to facilitate the understanding.}
\label{t00}
\begin{tabular}{cccccc}
\toprule
          & Target variable ($X$) & $|\Omega_X|$ & $\Pi^* $ & $Y$ & $|\Omega_Y|$  \\ \midrule
$E_1$ & $X_0$       & 2        & $\{X_1, X_2, X_{3}\}$ & $X_7$    & 7       \\
$E_2$ & $X_1$      & 2         & $\{X_0, X_2, X_7\}$     & $X_{6}$ & 2           \\ \toprule

    & $\Pi = \Pi^* \cup\{Y\}$ & $N$ & $N\cdot\mathrm{H}(X|\Pi^*)$ &  $N\cdot\mathrm{H}(Y|\Pi^*)$ & $(1-|\Omega_Y|)\PL(X|\Pi^*)$ \\ \midrule

$E_1$    & $\{X_1, X_2, X_3, X_{7}\}$ & $214$ & $126.68$ & $210.88$ & $128.76$ \\
$E_2$  & $\{X_0, X_2, X_6, X_{7}\}$ & $214$ & $102.81$ & $71.23$ &  $75.11$ \\ \bottomrule
\end{tabular}
\end{table}

Consider case $E_1$ in Table~\ref{t00}: We are constructing the list of candidate parent sets for $X=X_0$, and have just
computed the BIC score of $\Pi^*=\{X_1, X_2, X_{3}\}$. We are interested whether $\Pi=\Pi^*\cup\{X_7\}$ is a good
parent set. We have that 
$126.68 = N\cdot\mathrm{H}(X|\Pi^*)\leq (1-|\Omega_Y|)\PL(X|\Pi^*) = 128.76$, and thus the old pruning rule
(Lemma~\ref{lemma2}) is activated. On the other hand,
$N\cdot\mathrm{H}(Y|\Pi^*)=210.88$, so the new bound that uses the (conditional)
entropy of $Y$ is not activated. 

Now consider case $E_2$ in Table~\ref{t00}: We are building the list of candidate parent sets for $X=X_1$ and have just
computed the BIC score of $\Pi^*=\{X_0, X_2, X_7\}$.  We are interested whether $\Pi=\Pi^*\cup\{X_{6}\}$ is a good
parent set. We have that $71.23=N\cdot\mathrm{H}(Y|\Pi^*) \leq (1-|\Omega_Y|)\PL(X|\Pi^*) = 75.11$, and thus the new bound is activated,
while $N\cdot\mathrm{H}(X|\Pi^*)=102.81$ does not activate the old bound.

The result of Theorem~\ref{thm1b} can also be used to bound the maximum number of parents in any
given candidate parent set. While the asymptotic result is already implied by previous
work~\cite{decampos2011a}, we obtain the finer and interesting result of Theorem~\ref{lem3b}.
 
\begin{Theorem}
There is an optimal structure such that variable $X\in\X$ has at most 
\[
\max_{Y\in\X\setminus\{X\}} \left\lceil 1 + \log_2\left(\frac{\min\{\mathrm{H}(X); \mathrm{H}(Y)\}}{(|\Omega_X|-1)(|\Omega_Y|-1)}\right)+
\log_2 N - \log_2\log_b N\right\rceil^+
\]
parents, where $\lceil\cdot\rceil^+$ denotes the smallest natural
number greater than or equal to its argument.
\label{lem3b}
\end{Theorem}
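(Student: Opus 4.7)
The strategy is to specialize Theorem~\ref{thm1b} to the cheapest choice $\Pi'=\emptyset$, where the conditional entropies collapse to the single-variable entropies $\mathrm{H}(X)$ and $\mathrm{H}(Y)$. Expanding $\PL(X|\Pi^*)=-\tfrac{\log_b N}{2}(|\Omega_X|-1)|\Omega_{\Pi^*}|$, the hypothesis of that theorem reads
\[
N\cdot\min\{\mathrm{H}(X);\mathrm{H}(Y)\}\ \leq\ (|\Omega_Y|-1)\cdot\tfrac{\log_b N}{2}(|\Omega_X|-1)\,|\Omega_{\Pi^*}|.
\]
Since every variable has at least two categories, $|\Omega_{\Pi^*}|\geq 2^{|\Pi^*|}$, so a sufficient version of the above is $2^{|\Pi^*|}\geq \tfrac{2N\min\{\mathrm{H}(X);\mathrm{H}(Y)\}}{(\log_b N)(|\Omega_X|-1)(|\Omega_Y|-1)}$, and taking $\log_2$ of both sides gives exactly the quantity inside the $\lceil\cdot\rceil^+$ of the statement. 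Hence whenever $|\Pi^*|$ is at least that ceiling (for the chosen $Y$), Theorem~\ref{thm1b} fires and $\Pi^*\cup\{Y\}$, together with all its supersets, can be safely ignored.

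I then lift this per-$(\Pi^*,Y)$ condition to a global bound on $|\Pi_X|$. Let $k^*$ denote the maximum in the theorem statement and suppose, aiming at a contradiction, that some optimal DAG assigns $X$ a parent set $\Pi_X$ with $|\Pi_X|>k^*$. Pick any $Y\in\Pi_X$ and set $\Pi^*:=\Pi_X\setminus\{Y\}$; then $|\Pi^*|=|\Pi_X|-1\geq k^*$, which is at least the ceiling computed for this particular $Y$, so the first paragraph applies. Tracing the proofs of Theorem~\ref{thm1b} and Theorem~\ref{thm1} back to Lemma~\ref{lemma1} actually yields $\BIC(X|\Pi_X)\leq\BIC(X|\Pi^*)$, so swapping $\Pi_X$ for $\Pi^*$ inside the DAG cannot decrease the total score and cannot create directed cycles (since $\Pi^*\subset\Pi_X$). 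The new parent set is strictly smaller, so iterating this replacement produces, after at most $|\Pi_X|-k^*$ steps, an optimal DAG in which $X$ has at most $k^*$ parents.

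The main quantitative step worth flagging is the crude estimate $|\Omega_{\Pi^*}|\geq 2^{|\Pi^*|}$: this worst-case binary-cardinality substitution is what turns the state-space condition of Theorem~\ref{thm1b} into a bound on the \emph{number} of parents, and is the source of the leading $1$ as well as of the $\log_2$ in the final formula; any sharper cardinality information would yield a tighter bound but would destroy the clean, cardinality-free form of the statement. The other mildly subtle point is that the pruning rule only shrinks the parent set by one element at a time and merely guarantees the existence of an alternative optimal structure, so one has to check that optimality and acyclicity are preserved at \emph{every} step of the iteration; both are immediate from the inequality $\BIC(X|\Pi_X)\leq\BIC(X|\Pi^*)$ and from $\Pi^*\subset\Pi_X$, which makes the induction on $|\Pi_X|$ straightforward.
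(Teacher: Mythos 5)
Your proposal is correct and follows essentially the same route as the paper: specializing Theorem~\ref{thm1b} with $\Pi'=\emptyset$, expanding the penalty term, invoking $|\Omega_{\Pi^*}|\geq 2^{|\Pi^*|}$, and taking base-2 logarithms to turn the pruning condition into the stated bound on $|\Pi^*|$. The only difference is presentational — you phrase the conclusion as a contradiction with an explicit iterated parent-removal argument, whereas the paper reads off the bound directly from the contrapositive and delegates the "safe to ignore" bookkeeping to Lemma~\ref{lemma1} — but the underlying argument is identical.
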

\begin{proof}
If $\Pi=\emptyset$ is the optimal parent for $X$, then the result
trivially follows since $|\Pi|=0$. Now take
$\Pi$ such that $Y\in\Pi$ and $\Pi^*=\Pi\setminus\{Y\}$. Since $|\Pi|=|\Pi^*|+1$ and $|\Omega_{\Pi^*}|\geq 2^{|\Pi^*|}$,
we have $|\Pi|\leq \log_2 |\Omega_{\Pi^*}| + 1$. Now, if
\begin{align*}
&\log_2 |\Omega_{\Pi^*}| \geq 1 + \log_2\left(\frac{\min\{\mathrm{H}(X); \mathrm{H}(Y)\}}{(|\Omega_X|-1)(|\Omega_Y|-1)}\right)+\log_2 N - \log_2\log_b N\iff\\
&\log_2 |\Omega_{\Pi^*}| \geq \log_2\left(\frac{2 \min\{\mathrm{H}(X); \mathrm{H}(Y)\}}{(|\Omega_X|-1)(|\Omega_Y|-1)}\cdot\frac{N}{\log_b N}\right)\iff\\
&N\cdot \min\{\mathrm{H}(X); \mathrm{H}(Y)\}\leq \frac{\log_b N}{2}\cdot |\Omega_{\Pi^*}|(|\Omega_X|-1)(|\Omega_Y|-1) \iff\\
&N\cdot \min\{\mathrm{H}(X); \mathrm{H}(Y)\} \leq (1-|\Omega_Y|)\cdot
  \PL(X|\Pi^*)\, ,
\end{align*}
\noindent then by Theorem \ref{thm1b} (used with $\Pi'=\emptyset$) every
super set of $\Pi^*$ containing $Y$ can be safely ignored, and so it would
be $\Pi$. Therefore,
\[
|\Pi| \leq 1 + \log_2 |\Omega_{\Pi^*}| < 1 + 1 + \log_2\left(\frac{\min\{\mathrm{H}(X); \mathrm{H}(Y)\}}{(|\Omega_X|-1)(|\Omega_Y|-1)}\right) +\log_2 N -
\log_2\log_b N\, ,
\]
\noindent and since $|\Pi|$ is a natural number, the result follows
by applying the same reasoning for every $Y\in\X\setminus\{X\}$.
\end{proof}

Corollary~\ref{lem3a} is demonstrated for
completeness, since it is implied by previous work (see for instance~\cite{decampos2011a}). It is nevertheless
presented here in more detailed terms and without an asymptotic function.

\begin{Corollary}
There is an optimal structure such that each variable has at most 
$\lceil 1+\log_2 N - \log_2\log_b N\rceil$ parents.
\label{lem3a}
\end{Corollary}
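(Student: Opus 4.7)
The plan is to derive Corollary~\ref{lem3a} as a direct weakening of Theorem~\ref{lem3b}. Notice that the bound in Theorem~\ref{lem3b} splits into a pair-dependent piece, $\log_2\!\bigl(\frac{\min\{\mathrm{H}(X);\mathrm{H}(Y)\}}{(|\Omega_X|-1)(|\Omega_Y|-1)}\bigr)$, and a remaining piece $1+\log_2 N - \log_2\log_b N$ which already matches the corollary verbatim. So it is enough to show that the pair-dependent piece is non-positive for every admissible $Y\in\X\setminus\{X\}$; the $\max$ over $Y$ is then also non-positive and drops out of the ceiling.

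The key step is the inequality $\min\{\mathrm{H}(X),\mathrm{H}(Y)\}\leq (|\Omega_X|-1)(|\Omega_Y|-1)$. I would obtain it from the standard fact that $\mathrm{H}(X)\leq \log_b |\Omega_X|$ (uniform distribution maximizes entropy in the base $b$ used throughout the paper), combined with the elementary inequality $\log_b k \leq k-1$ valid for $b\geq 2$ and $k\geq 2$ (immediate since $k-1-\log_b k$ is $0$ at $k=2,b=2$, positive for $b>2$, and has non-negative derivative in $k$). Hence $\mathrm{H}(X)\leq |\Omega_X|-1 \leq (|\Omega_X|-1)(|\Omega_Y|-1)$ because $|\Omega_Y|\geq 2$, and symmetrically for $\mathrm{H}(Y)$; so the ratio inside the logarithm is at most $1$ and its $\log_2$ is at most $0$.

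Plugging this into Theorem~\ref{lem3b} gives that every variable has at most $\lceil 1 + \log_2 N - \log_2\log_b N\rceil^+$ parents in some optimal structure. To finish, I would observe that for $N\geq 2$ and $b\geq 2$ we have $\log_b N \leq \log_2 N$, so $\log_2 N - \log_2\log_b N\geq 0$ and the quantity inside the ceiling is at least $1$; the $(\cdot)^+$ notation therefore coincides with the ordinary ceiling, matching the statement of the corollary exactly.

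I do not anticipate a real obstacle: the argument is a one-line specialization of Theorem~\ref{lem3b}, and the only substantive ingredient is the sub-$(|\Omega_X|-1)$ bound on single-variable entropy, which is standard. The one thing worth being careful about is keeping the logarithm bases consistent, since entropy is defined in base $b$ while the cardinality bound $|\Omega_{\Pi^*}|\geq 2^{|\Pi^*|}$ in the parent proof used base $2$; the inequality $\log_b k \leq k-1$ handles this cleanly as long as $b\geq 2$, which the paper assumes.
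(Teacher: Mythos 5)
Your proposal is correct and follows essentially the same route as the paper: both specialize Theorem~\ref{lem3b} by bounding the entropy via $\mathrm{H}\leq\log_b|\Omega|\leq|\Omega|-1$ so that the pair-dependent logarithm is non-positive, then observe the remaining quantity is positive so $\lceil\cdot\rceil^+$ becomes the ordinary ceiling. The only cosmetic difference is that the paper bounds the minimum by $\mathrm{H}(Y)$ and absorbs the factor $(|\Omega_X|-1)$ in a separate step, whereas you bound it by $\mathrm{H}(X)$ and use $|\Omega_Y|\geq 2$ directly; the substance is identical.
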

\begin{proof}
By Theorem~\ref{lem3b}, we have that $\Pi$ can be a parent of a node
$X$ only if
\begin{align*}
|\Pi| &\leq \max_{Y\in\X\setminus\{X\}}\left\lceil 1 + \log_2\left(\frac{\min\{\mathrm{H}(X);\mathrm{H}(Y)\}}{(|\Omega_X|-1)(|\Omega_Y|-1)}\right)+\log_2 N - \log_2\log_b N \right\rceil^+\\
 &\leq \max_{Y\in\X\setminus\{X\}}\left\lceil 1 + \log_2\left(\frac{\mathrm{H}(Y)}{(|\Omega_X|-1)(|\Omega_Y|-1)}\right)+\log_2 N - \log_2\log_b N \right\rceil^+\\
&\leq \max_{Y\in\X\setminus\{X\}}\max\left\lceil 1 + \log_2\left(\frac{\log_b |\Omega_Y|}{(|\Omega_X|-1)(|\Omega_Y|-1)}\right)+\log_2 N - \log_2\log_b N \right\rceil^+\\
&\leq  \left\lceil 1 + \log_2\left(\frac{1}{(|\Omega_X|-1)}\right)+\log_2 N - \log_2\log_b N \right\rceil^+\\
&\leq \lceil  1 +\log_2 N - \log_2\log_b N\rceil^+\\
& \leq \lceil 1+\log_2 N - \log_2\log_b N\rceil\, ,
\end{align*}
\noindent since it is assumed that $N\geq 2$ and $b\geq 2$.
\end{proof}

Theorem~\ref{lem3b} can be used to bound the
number of parent sets per variable, even before computing parent sets
for them, with the low computation cost of computing the empirical
entropy of each variable once (hence overall cost of $O(n\cdot N)$
time). We point out that Theorem~\ref{lem3b} can provide effective
bounds (considerably smaller than $\lceil 1+\log_2 N - \log_2\log_b N\rceil$) on the number of parents for specific variables, particularly
when number of states is high and entropies are low, as we will see in
the next section.

\section{Experiments}
\label{bounds:exp}

We run experiments using a collection of data sets from the UCI
repository~\cite{Lichman:2013}. Table ~\ref{t1} shows the data set
names, number of variables $n$ and number of data points $N$. In the
same table, we show the maximum number of parents that a node can
have, according to the new result of Theorem~\ref{lem3b}, as well as
the old result from the literature (which we present in
Corollary~\ref{lem3a}). The old bound is global, so a single
number is given in column 5, while the new result of
Theorem~\ref{lem3b} implies a different maximum number of parents per
node. We use the notation {\em bound (number of times)}, with the
bound followed by the number of nodes for which the new bound reached
that value, in parenthesis (so all numbers in parenthesis in a row
should sum to $n$ of that row).
We see that the gains with the new bounds are quite significant and can
prune great parts of the search space further than previous results.
\begin{table}[ht]
\centering
\begin{tabular}{l  c  c | l  c }  \toprule
& & & \multicolumn{2}{c}{Bound on number of parents} \\
Dataset 	& $n$ & $N$  	& Theorem 3 & Corollary 1 \\ \midrule
          glass &     8 &   214 &                             6 (7), 3 (1) &     6 \\
       diabetes &     9 &   768 &                                    7 (9) &     8 \\
    tic-tac-toe &    10 &   958 &                                   6 (10) &     8 \\
            cmc &    10 &  1473 &                             8 (3), 7 (7) &     9 \\
  breast-cancer &    10 &   286 &               6 (4), 5 (2), 4 (1), 3 (3) &     7 \\
  solar-flare &    12 &  1066 &        7 (4), 6 (1), 5 (5), 3 (1), 2 (1) &     8 \\
        heart-h &    12 &   294 &               6 (6), 5 (3), 4 (2), 3 (1) &     7 \\
          vowel &    14 &   990 &                            8 (12), 4 (2) &     8 \\
            zoo &    17 &   101 &                     5 (10), 4 (6), 2 (1) &     5 \\
           vote &    17 &   435 &                            7 (15), 6 (2) &     7 \\
        segment &    17 &  2310 &                            9 (16), 6 (1) &     9 \\
          lymph &    18 &   148 &                      5 (8), 4 (8), 3 (2) &     6 \\
  primary-tumor &    18 &   339 &               6 (9), 5 (7), 4 (1), 2 (1) &     7 \\
        vehicle &    19 &   846 &                            7 (18), 6 (1) &     8 \\
      hepatitis &    20 &   155 &                            5 (18), 4 (2) &     6 \\
          colic &    23 &   368 &                     6 (8), 5 (12), 4 (3) &     7 \\
          autos &    26 &   205 &       6 (16), 5 (3), 4 (1), 3 (5), 1 (1) &     6 \\
          flags &    29 &   194 &        6 (5), 5 (7), 4 (7), 3 (7), 2
                                  (3) &     6 \\
\bottomrule
\end{tabular}
\caption{Maximum number of parents that nodes have using new (column
  4) and previous bounds (column 5). In column 4, we list the bound on number
  of parents followed by how many nodes have that bound in parenthesis
  (the new theoretical results obtain a specific bound per node,
  while previous results obtain a single global bound).}\label{t1}
\end{table}

Our second set of experiments compares the activation of
Theorems~\ref{thm1},~\ref{thm1b}, and~\ref{lem3b} in pruning the search space for
the construction of the list of candidate parent sets. Tables~\ref{t2}
and~\ref{t3} (in the end of this document) present the results as follows. Columns one to four
contain, respectively, the data set name, number of variables, number of data points
and maximum in-degree (in-d) that we impose (a maximum in-degree is
impose so as we can compare the obtained results among different
approaches). The fifth column, named $|S|$, presents the total number
of parent sets that need to be evaluated by the brute-force procedure
(taking into consideration the imposed maximum in-degree). Columns 6
to 12 present the number of times that different pruning results are
activated when exploring the whole search space. Larger numbers means
that more parent sets are ignored (even without being evaluated). The
naming convention for the pruning algorithms as used on those columns is:
\begin{enumerate}[label={{\em Alg}\arabic*}]
\item Application of Theorem~\ref{thm1} using $\mathrm{H}(X|\Pi^*)$ in the
  expression of the rule (instead of the minimization), where $X$ is the variable for which we are
  building the list and
  $\Pi$ is the current parent set being explored. This is equivalent to the previous rule in
  the literature, as presented in this paper in Lemma~\ref{lemma2}.
\item Application of Theorem~\ref{thm1} using $\mathrm{H}(Y|\Pi^*)$ in the
  expression of the rule (instead of the minimization), where $X$ is the variable for which we are
  building the list and
  $Y$ is the variable just to be inserted in the parent set $\Pi^*$
  that is being
  explored. This is the new pruning rule which makes most use of
  entropy, but it may be slower than the others (since conditional
  entropies might need to be evaluated, if they were not yet).
\item  Application of Theorem~\ref{thm1b} using $H(X)$ in the formula, that is, with
  $\Pi'=\emptyset$ (and instead of the
  minimization). This is a slight improvement to the known rule in the
  literature regarding the maximum number of parents of a variable and is very
  fast, since it does not depend on evaluating any parent sets.
\item  Application of Theorem~\ref{thm1b} using $H(Y)$ in the formula, that is, with
  $\Pi'=\emptyset$ (and instead of the minimization). This is a
  different improvement to the known rule in the
  literature regarding the maximum number of parents of a variable and is very
  fast, since it does not depend on evaluating any parent sets.
\end{enumerate}
We also present the combined number of pruning obtained by some of
these ideas when they are applied together. Of particular interest is
column 8 with ({\em Alg}1)+({\em Alg}2), as it shows the largest amount of pruning
that is possible, albeit more computationally costly
because of the (possibly required) computations for ({\em Alg}2). This
is also presented graphically in the boxplot of Figure~\ref{fig111},
where the values for the 18 data sets are summarized and the amount of
pruning is divided by the pruning of ({\em Alg}1), and so a ratio
above one shows (proportional) gain with respect to the previous
literature pruning rule.
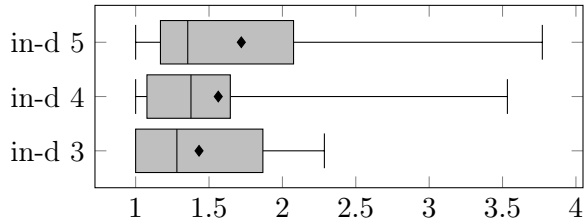
\begin{figure}
\begin{center}
\begin{tikzpicture}
  \begin{axis}
    [
cycle list={{black}},
     every average/.style={/tikz/mark=diamond*},
  boxplot/draw direction=x,
  boxplot/variable width,
  height=4cm,
  width=8cm,
  boxplot/every box/.style={fill=gray!50},
    ytick={1,2,3},
    yticklabels={ in-d 3, in-d 4, in-d 5},
    ]
    \addplot+[mark=diamond*,
    boxplot prepared={
      median=1.281,
      upper quartile=1.867,
      lower quartile=1,
      upper whisker=2.286,
      lower whisker=1
    },
   ]
 coordinates {(0,1.432)};
    \addplot+[mark=diamond*,
    boxplot prepared={
      median=1.377,
      upper quartile=1.645,
      lower quartile=1.077,
      upper whisker=3.533,
      lower whisker=1
    },
    ] coordinates {(0,1.563)};
    \addplot+[mark=diamond*,
    boxplot prepared={
      median=1.355,
      upper quartile=2.076,
      lower quartile=1.169,
      upper whisker=3.771,
      lower whisker=1
    },
    ] coordinates {(0,1.720)};
\end{axis}
\end{tikzpicture}
\end{center}
\caption{Ratio between pruned candidates using ({\em Alg}1) (which
 theoretically subsumes ({\em Alg}3)) and ({\em
    Alg}2) divided by pruned candidates using prune approach
  ({\em Alg}1) alone, for different values of maximum in-degree. Greater than one
  means better than ({\em Alg}1). Results over 18 data sets. Averages
  are marked with a diamond.}\label{fig111}
\end{figure}

Column 12 of Tables~\ref{t2} and~\ref{t3} have the pruning results
(number of ignored candidates) for ({\em Alg}1) and ({\em Alg}4) together, since this represents the pruning obtained
by the old rule plus the new rule give by Theorem~\ref{thm1b} such that no extra computational
cost takes place (and moreover it subsumes approach ({\em Alg}3), since ({\em Alg}1) is theoretically superior to
({\em Alg}3)). Again, this is summarized in the boxplot of
Figure~\ref{fig112} over the 18 data sets and the values are divided
by the amount of pruning of ({\em Alg}1) alone, so values
above one show the (proportional) gain with respect to the previous
literature rule. 

As we can see in more detail in Tables~\ref{t2} and~\ref{t3}, the gains with
the new pruning ideas are significant in many circumstances. Moreover,
there is no extra computational cost for applying
({\em Alg}3) and ({\em Alg}4), so one should always apply those rules
while deciding selectively whether to employ prune ({\em Alg}2) or not
(we recall that one can tune that rule by exploiting the flexibility
of Theorem~\ref{thm1b} and searching for a subset that is
already available in the computed lists, so a more sophisticated pruning scheme
is also possible).
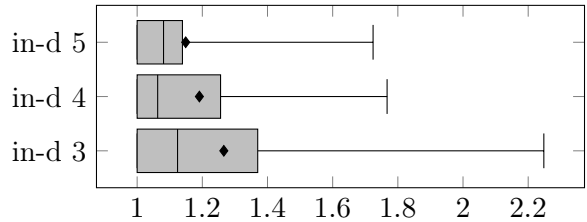
\begin{figure}
\begin{center}
\begin{tikzpicture}
  \begin{axis}
    [
cycle list={{black}},
     every average/.style={/tikz/mark=diamond*},
  boxplot/draw direction=x,
  boxplot/variable width,
  width=8cm,
  height=4cm,
  boxplot/every box/.style={fill=gray!50},
    ytick={1,2,3},
    yticklabels={
     in-d 3, in-d 4, in-d 5}, 
    ]
    \addplot+[mark=diamond*,
    boxplot prepared={
      median=1.124,
      upper quartile=1.370,
      lower quartile=1,
      upper whisker=2.248,
      lower whisker=1
    },
    ] coordinates {(0,1.266)};
    \addplot+[mark=diamond*,
    boxplot prepared={
      median=1.063,
      upper quartile=1.256,
      lower quartile=1,
      upper whisker=1.767,
      lower whisker=1
    },
    ] coordinates {(0,1.191)};
    \addplot+[mark=diamond*,
    boxplot prepared={
      median=1.081,
      upper quartile=1.139,
      lower quartile=1,
      upper whisker=1.724,
      lower whisker=1
    },
    ] coordinates {(0,1.149)};
  \end{axis}
\end{tikzpicture}
\end{center}
\caption{Ratio between pruned candidates using ({\em Alg}1) (which
 theoretically subsumes ({\em Alg}3)) and ({\em
    Alg}4) divided by pruned candidates using prune approach
  ({\em Alg}1) alone, for different values of maximum in-degree. Greater than one
  means better than ({\em Alg}1). Results over 18 data sets. Averages
  are marked with a diamond.}\label{fig112}
\end{figure}

\section{Conclusions}
\label{sconc}
This paper present new non-trivial pruning rules to be used with the
Bayesian Information Criterion (BIC) score for learning the structure
of Bayesian networks. The derived theoretical bounds extend previous
results in the literature and can be promptly integrated into existing
solvers with minimal effort and computational costs.  They imply
faster computations without losing optimality.  The very
computationally efficient version of the new rules imply gains of
around 20\% with respect to previous work, according to our
experiments, while the most computationally demanding pruning achieves
around 50\% more pruning than before.  We conjecture that further
bounds for the BIC score are unlikely to exist unless for some
particular cases and situations.

\section*{Acknowledgments}
 Work partially supported by the Swiss NSF grant n.~200021\_146606~/1
 and ns.~IZKSZ2\_162188.

\bibliographystyle{elsarticle-num}

\newpage

\begin{sidewaystable}
\centering
\begin{tabular}{l  c  c  c  c  c r  r  r r r r r }  \toprule
Dataset  & $n$	  & $N$  & in-d	& $|S|$ & ({\em Alg}1) & ({\em Alg}2) & ({\em Alg}1) + ({\em Alg}2) & ({\em Alg}3) & ({\em Alg}4) & ({\em Alg}3) + ({\em Alg}4) & ({\em Alg}1) + ({\em Alg}4) \\ \midrule
\multirow{3}{*}{glass} & \multirow{3}{*}{8} & \multirow{3}{*}{214} 
& 3 & 504 & 0 & 0 & 0 & 0 & 0 & 0 & 0 \\
& & & 4 & 784 & 114 & 66 & 154 & 0 & 0 & 0 & 114 \\
& & & 5 & 952 & 222 & 171 & 280 & 105 & 57 & 126 & 240 \\
\multirow{3}{*}{diabetes} & \multirow{3}{*}{9} & \multirow{3}{*}{768} 
& 3 & 828 & 0 & 0 & 0 & 0 & 0 & 0 & 0 \\
& & & 4 & 1458 & 0 & 0 & 0 & 0 & 0 & 0 & 0 \\
& & & 5 & 1962 & 0 & 0 & 0 & 0 & 0 & 0 & 0 \\
\multirow{3}{*}{tic-tac-toe} & \multirow{3}{*}{10} & \multirow{3}{*}{958} 
& 3 & 1290 & 0 & 0 & 0 & 0 & 0 & 0 & 0 \\
& & & 4 & 2550 & 0 & 0 & 0 & 0 & 0 & 0 & 0 \\
& & & 5 & 3810 & 659 & 1114 & 1244 & 504 & 504 & 504 & 659 \\
\multirow{3}{*}{breast-cancer} & \multirow{3}{*}{10} & \multirow{3}{*}{286} 
& 3 & 1290 & 624 & 578 & 704 & 611 & 528 & 671 & 684 \\
& & & 4 & 2550 & 1776 & 1660 & 1902 & 1756 & 1475 & 1855 & 1874 \\
& & & 5 & 3810 & 3021 & 2905 & 3161 & 2997 & 2589 & 3109 & 3130 \\
\multirow{3}{*}{cmc} & \multirow{3}{*}{10} & \multirow{3}{*}{1473} 
& 3 & 1290 & 0 & 0 & 0 & 0 & 0 & 0 & 0 \\
& & & 4 & 2550 & 30 & 81 & 106 & 7 & 27 & 34 & 53 \\
& & & 5 & 3810 & 410 & 691 & 877 & 271 & 483 & 660 & 707 \\
\multirow{3}{*}{heart-h} & \multirow{3}{*}{12} & \multirow{3}{*}{294} 
& 3 & 2772 & 206 & 356 & 471 & 196 & 330 & 459 & 463 \\
& & & 4 & 6732 & 1820 & 2710 & 3098 & 1615 & 2384 & 2908 & 2942 \\
& & & 5 & 12276 & 6223 & 7914 & 8449 & 5837 & 6941 & 8110 & 8157 \\
\multirow{3}{*}{solar-flare} & \multirow{3}{*}{12} & \multirow{3}{*}{1066} 
& 3 & 2772 & 825 & 1326 & 1666 & 687 & 1131 & 1468 & 1544 \\
& & & 4 & 6732 & 3419 & 4772 & 5464 & 2945 & 4170 & 5068 & 5211 \\
& & & 5 & 12276 & 8144 & 10197 & 11005 & 7324 & 9137 & 10563 & 10726 \\
\multirow{3}{*}{vowel} & \multirow{3}{*}{14} & \multirow{3}{*}{990} 
& 3 & 5278 & 288 & 478 & 614 & 132 & 111 & 243 & 379 \\
& & & 4 & 15288 & 1718 & 2468 & 2854 & 1232 & 551 & 1343 & 1809 \\
& & & 5 & 33306 & 10257 & 13170 & 14247 & 8162 & 2333 & 9065 & 11140 \\
\multirow{3}{*}{zoo} & \multirow{3}{*}{17} & \multirow{3}{*}{101} 
& 3 & 11832 & 1704 & 2354 & 2655 & 735 & 518 & 1237 & 2189 \\
& & & 4 & 42772 & 12652 & 17034 & 20055 & 7875 & 2843 & 10201 & 14807 \\
& & & 5 & 117028 & 70383 & 91180 & 94306 & 32541 & 20162 & 49366 & 79220 \\
\multirow{3}{*}{segment} & \multirow{3}{*}{17} & \multirow{3}{*}{2310} 
& 3 & 11832 & 0 & 0 & 0 & 0 & 0 & 0 & 0 \\
& & & 4 & 42772 & 201 & 309 & 506 & 0 & 0 & 0 & 201 \\
& & & 5 & 117028 & 3983 & 7580 & 10635 & 0 & 0 & 0 & 3983 \\
\end{tabular}
\caption{Pruning results for multiple UCI data sets. Columns contain,
  respectively: data set name, number of variables, number of data
  points, maximum imposed in-degree, size of search space, followed by
  the number of pruned parent sets when considering (a combination
  of) different pruning rules (see the list of pruning rules for more details). }\label{t2}
\end{sidewaystable}

\begin{sidewaystable}
\centering
\begin{tabular}{l  c  c  c  c  c r  r  r r r r r }  \toprule
Dataset  & $n$	  & $N$  & in-d	& $|S|$ & ({\em Alg}1) & ({\em Alg}2) & ({\em Alg}1) + ({\em Alg}2) & ({\em Alg}3) & ({\em Alg}4) & ({\em Alg}3) + ({\em Alg}4) & ({\em Alg}1) + ({\em Alg}4) \\ \midrule
\multirow{3}{*}{vote} & \multirow{3}{*}{17} & \multirow{3}{*}{435} 
& 3 & 11832 & 0 & 0 & 0 & 0 & 0 & 0 & 0 \\
& & & 4 & 42772 & 0 & 0 & 0 & 0 & 0 & 0 & 0 \\
& & & 5 & 117028 & 577 & 852 & 1429 & 0 & 0 & 0 & 577 \\
\multirow{3}{*}{primary-tumor} & \multirow{3}{*}{18} & \multirow{3}{*}{339} 
& 3 & 14994 & 2196 & 2634 & 3097 & 2176 & 1475 & 3034 & 3049 \\
& & & 4 & 57834 & 13159 & 18299 & 20780 & 12827 & 11817 & 20073 & 20370 \\
& & & 5 & 169218 & 61443 & 98794 & 109665 & 58182 & 65331 & 98280 & 99988 \\
\multirow{3}{*}{lymph} & \multirow{3}{*}{18} & \multirow{3}{*}{148} 
& 3 & 14994 & 3254 & 4100 & 5779 & 2822 & 2921 & 4689 & 5066 \\
& & & 4 & 57834 & 31751 & 37285 & 44532 & 27057 & 23232 & 37284 & 40225 \\
& & & 5 & 169218 & 130426 & 147311 & 155807 & 118134 & 97946 & 143188 & 149137 \\
\multirow{3}{*}{vehicle} & \multirow{3}{*}{19} & \multirow{3}{*}{846} 
& 3 & 18753 & 3 & 3 & 6 & 0 & 0 & 0 & 3 \\
& & & 4 & 76893 & 697 & 1096 & 1750 & 0 & 0 & 0 & 697 \\
& & & 5 & 239685 & 10863 & 23197 & 32306 & 0 & 0 & 0 & 10863 \\
\multirow{3}{*}{hepatitis} & \multirow{3}{*}{20} & \multirow{3}{*}{155} 
& 3 & 23180 & 0 & 0 & 0 & 0 & 0 & 0 & 0 \\
& & & 4 & 100700 & 0 & 0 & 0 & 0 & 0 & 0 & 0 \\
& & & 5 & 333260 & 21692 & 64380 & 81809 & 0 & 0 & 0 & 21692 \\
\multirow{3}{*}{colic} & \multirow{3}{*}{23} & \multirow{3}{*}{368} 
& 3 & 41239 & 3108 & 4622 & 5896 & 2362 & 3515 & 4536 & 5067 \\
& & & 4 & 209484 & 94741 & 98958 & 123967 & 84182 & 79540 & 109266 & 115882 \\
& & & 5 & 815166 & 640892 & 643377 & 709743 & 607555 & 515894 & 676558 & 692531 \\
\multirow{3}{*}{autos} & \multirow{3}{*}{26} & \multirow{3}{*}{205} 
& 3 & 68250 & 20393 & 22139 & 27655 & 17625 & 14600 & 22815 & 25223 \\
& & & 4 & 397150 & 237949 & 252244 & 292005 & 186808 & 140784 & 232158 & 272899 \\
& & & 5 & 1778530 & 1384004 & 1523500 & 1605611 & 1221799 & 850889 & 1448294 & 1533419 \\
\multirow{3}{*}{flags} & \multirow{3}{*}{29} & \multirow{3}{*}{194} 
& 3 & 106720 & 61999 & 57143 & 74774 & 59060 & 46597 & 69884 & 71439 \\
& & & 4 & 700495 & 579352 & 547529 & 634327 & 562905 & 405275 & 613594 & 620955 \\
& & & 5 & 3550586 & 3323460 & 3301446 & 3469073 & 3261508 & 2478191 & 3411111 & 3433015 \\
\end{tabular}
\caption{Pruning results for multiple UCI data sets. Columns contain,
  respectively: data set name, number of variables, number of data
  points, maximum imposed in-degree, size of search space, followed by
  the number of pruned parent sets when considering (a combination
  of) different pruning rules (see the list of pruning rules for more details). }\label{t3}
\end{sidewaystable}

\end{document}